\theoremstyle{definition}
\newtheorem{theorem}{\normalfont\bfseries Theorem}
\newtheorem{assumption}{\normalfont\bfseries Assumption}
\newtheorem{proposition}{\normalfont\bfseries Proposition}
\newtheorem{definition}{\normalfont\bfseries Definition}
\newtheorem{corollary}{\normalfont\bfseries Corollary}
\newtheorem{remark}{\normalfont\bfseries Remark}
\newtheorem{example}{\normalfont\bfseries Example}
\newtheorem*{problem}{\normalfont \bfseries Problem Statement}
\newcommand{\dhdx}{\nabla h}
\newcommand{\dVdx}{\nabla V}
\newcommand{\dhdq}{\nabla h}
\title{\LARGE \bf
Model-Free Safety-Critical Control for Robotic Systems
}
\author{Tamas G. Molnar, Ryan K. Cosner, Andrew W. Singletary, Wyatt Ubellacker, and Aaron D. Ames
\thanks{*This research is supported in part by the National Science Foundation, CPS Award \#1932091, Dow (\#227027AT) and Aerovironment.}
\thanks{The Authors are with the Control and Dynamical Systems and the Department of Mechanical and Civil Engineering, California Institute of Technology, Pasadena, CA 91125, USA.
{\tt\small \{tmolnar, rkcosner, asinglet, wubellac, ames\}@caltech.edu}}%
}
\begin{document}

\maketitle
\thispagestyle{empty}
\pagestyle{empty}

\begin{abstract}

This paper presents a framework for the safety-critical control of robotic systems, when safety is defined on safe regions in the configuration space. 
To maintain safety, we synthesize a safe velocity based on control barrier function theory without relying on a -- potentially complicated -- high-fidelity dynamical model of the robot.
Then, we track the safe velocity with a tracking controller.
This culminates in {\em model-free safety critical control}.
We prove theoretical safety guarantees for the proposed method.
Finally, we demonstrate that this approach is application-agnostic.
We execute an obstacle avoidance task with a Segway in high-fidelity simulation, as well as with a Drone and a Quadruped in hardware experiments.
\end{abstract}

\section{INTRODUCTION}

Safety is a fundamental requirement in the control of many robotic systems, including legged~\cite{Teng2021}, flying \cite{tordesillas2019faster} and wheeled robots~\cite{Kousik2020}.
Provable safety guarantees and safety-critical control for robotics have therefore attracted significant attention.
Synthesizing safety-critical controllers, however, typically relies on high-fidelity dynamical models describing the robots, which are often complicated and high-dimensional.
The underlying control laws, therefore, are nontrivial to synthesize and implement~\cite{Nubert2020, Zheng2021}.
For example, control barrier functions (CBFs)~\cite{AmesXuGriTab2017}
are a popular tool to achieve provable safety guarantees, although designing CBFs and calculating the corresponding safe control inputs may be nontrivial if the dynamics are complicated.

To tackle this,~\cite{Squires2021} proposed model-free barrier functions by a data-driven approach, while~\cite{Jankovic2018, Seiler2021} used robust CBFs to overcome the effects of unmodeled dynamics.
Furthermore, many works rely on reduced-order models for planning and control~\cite{Singh2020}.
These include single integrator models for multi-robot applications~\cite{Sabattini2013, Zhao2017}
or unicycle models for wheeled robots~\cite{DeLuca2001, Koung2020}, which have proven to be extremely useful models despite being overly simplistic.
Here we draw inspiration from these models and approaches.

In this paper, we rely on CBFs to synthesize safe controllers for robotic systems in which safe regions are defined in the configuration space.
We treat the safety-critical aspect of this problem in a model-free fashion, without relying on the full-order dynamics of the robot.
We follow the approach of~\cite{Singletary2021, Singletary2022}, where a safe velocity was designed based on reduced-order kinematics -- i.e., without the full dynamical model -- and this safe velocity was tracked by a velocity tracking controller.
This approach is agnostic to the application domain, although the underlying tracking controllers depend on the system and their synthesis or tuning may require knowledge about the full model.
Velocity tracking, however, is well-established in robotics~\cite{Spong2005} and controllers executing stable tracking are available for many robots.
Once velocity tracking is established, enforcing safety does not require further consideration of the high-fidelity model --- we refer to this as {\em model-free safety-critical control}.

While the idea behind this control method was established in~\cite{Singletary2021}, the present paper formalizes and generalizes this approach via two main contributions.
First, we provide a theoretical proof of the safe behavior for robotic systems executing the proposed control approach.
Second, we demonstrate the applicability of this method on wheeled, flying and legged robots: a Segway (in simulation), a Drone and a Quadruped (in hardware experiments).
This justifies that the method is agnostic to the application domain.

\begin{figure}
\centering
\includegraphics[scale=1.0]{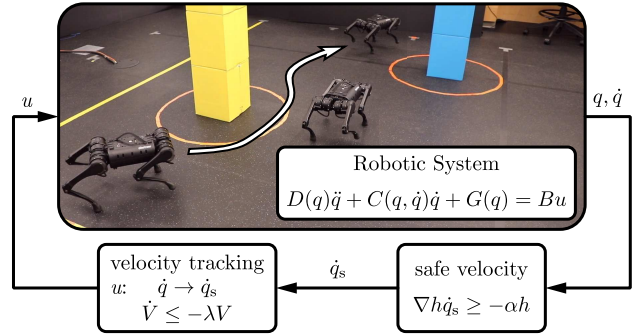}
\caption{
The proposed control method and its execution on hardware.
While the safety-critical controller does not rely on the full dynamical model of the robot, it controls the motion in a provably safe manner.
}
\label{fig:concept}
\end{figure}

The paper is organized as follows.
Section~\ref{sec:preliminaries} revisits control Lyapunov and control barrier functions to achieve stability and safety.
Section~\ref{sec:theory} outlines the proposed control method, states and proves the safety guarantees thereof.
Section~\ref{sec:application} discusses robotic applications through simulations and hardware experiments.
Section~\ref{sec:conclusions} concludes the paper.

\section{PRELIMINARIES}
\label{sec:preliminaries}

Our approach relies on stable tracking of a safe velocity to achieve safety for robotic systems.
Thus, first we introduce the notions of stability and safety, and the guarantees thereof provided by control Lyapunov functions (CLFs) and control barrier functions (CBFs).
CLFs and CBFs are illustrated in Fig.~\ref{fig:CLF_CBF} together with
a stable and a safe trajectory.

Consider control-affine systems with state space ${X \subseteq \mathbb{R}^{n}}$, state ${x \in X}$, set of admissible inputs ${U \subseteq \mathbb{R}^{m}}$, and control input ${u \in U}$:
\begin{equation}
\dot{x}=f(x) + g(x) u.
\label{eq:system_general}
\end{equation}
Let ${f : X \to \mathbb{R}^{n}}$ and ${g : X \to \mathbb{R}^{n \times m}}$ be Lipschitz continuous.
For an initial condition ${x(0)=x_{0} \in X}$ and a Lipschitz continuous controller ${k : X \to U}$, ${u=k(x)}$, the system has a unique solution ${x(t)}$ which we assume to exist for all ${t \geq 0}$.
We also assume that ${x(t) \equiv 0}$ is an equilibrium of~(\ref{eq:system_general}) if ${u(t) \equiv 0}$ (i.e., ${f(0)=0}$) and $X$ is an open and connected neighborhood of ${x = 0}$.

Throughout the paper we use the following notation.
${\| . \|}$ is Euclidean norm and ${\| . \|_{\infty}}$ is maximum norm.
We say that a continuous function ${\gamma : [0,b) \to \mathbb{R}_{\geq 0}}$, ${b \in \mathbb{R}_{>0}}$ is of {\em class-$\mathcal{K}$} (or ${\gamma : (-a,b) \to \mathbb{R}}$, ${a,b \in \mathbb{R}_{>0}}$ is of {\em extended class-$\mathcal{K}$}) if $\gamma$ is strictly monotonically increasing and ${\gamma(0) = 0}$.

\subsection{Stability and Control Lyapunov Functions}

Hereinafter, we rely on the notion of exponential stability.

\begin{definition}\label{def:stability}
The equilibrium ${x=0}$ of system~(\ref{eq:system_general}) is {\em exponentially stable} if there exist ${a, M, \beta \in \mathbb{R}_{>0}}$ such that ${\| x_{0} \| \leq a \Rightarrow \| x(t) \| \leq M {\rm e}^{-\beta t} \| x_{0} \|}$, ${\forall t \geq 0}$.
\end{definition}

An efficient technique to achieve exponential stability is control synthesis via control Lyapunov functions (CLFs)~\cite{Khalil2002},
as stated formally below.

\begin{definition}\label{def:CLF}
A continuously differentiable function ${V : X \to \mathbb{R}_{\geq 0}}$ is a {\em control Lyapunov function (CLF)} for~(\ref{eq:system_general}) if there exists ${c, k_{1}, k_{2}, \lambda \in \mathbb{R}_{>0}}$ such that ${\forall x \in X}$:
\begin{align}
\begin{split}
k_{1} \| x \|^c \leq V(x) \leq k_{2} \| x \|^c \\
\inf_{u \in U} \dot{V}(x,u) \leq - \lambda V(x),
\end{split}
\label{eq:CLF_condition}
\end{align}
where
\begin{equation}
\dot{V}(x,u) = \dVdx(x) (f(x) + g(x) u)
\end{equation}
is the derivative of $V$ along system~(\ref{eq:system_general}).
\end{definition}

\begin{theorem}[\cite{Khalil2002}]\label{thm:stability_general}
\textit{
If $V$ is a CLF for~(\ref{eq:system_general}), then any locally Lipschitz continuous controller ${u=k(x)}$ satisfying
\begin{equation}
\dot{V}(x,k(x)) \leq - \lambda V(x),
\label{eq:stability_condition}
\end{equation}
${\forall x \in X}$ renders ${x=0}$ exponentially stable.
}
\end{theorem}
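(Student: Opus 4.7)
The plan is to follow the standard Lyapunov-function-to-exponential-bound argument, specialized to the power-type lower and upper bounds in Definition~\ref{def:CLF}. Since $k$ is locally Lipschitz, the closed-loop vector field $f(x) + g(x)k(x)$ is locally Lipschitz, so for any $x_0 \in X$ there is a unique solution $x(t)$, which we are told exists for all $t \geq 0$. Along this solution, $V(x(t))$ is continuously differentiable and, by hypothesis~(\ref{eq:stability_condition}), satisfies the scalar differential inequality $\dot{V}(x(t),k(x(t))) \leq -\lambda V(x(t))$.

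First I would apply the Comparison Lemma (or equivalently Gr\"onwall's inequality) to this inequality to conclude
\begin{equation}
V(x(t)) \leq V(x_{0})\,{\rm e}^{-\lambda t}, \quad \forall t \geq 0.
\end{equation}
Next I would sandwich $\|x(t)\|$ using the two-sided power bound $k_{1}\|x\|^{c} \leq V(x) \leq k_{2}\|x\|^{c}$: the lower bound gives $k_{1}\|x(t)\|^{c} \leq V(x(t))$, and the upper bound applied at $t=0$ gives $V(x_{0}) \leq k_{2}\|x_{0}\|^{c}$. Chaining these with the exponential decay of $V$ yields
\begin{equation}
\|x(t)\|^{c} \leq \frac{k_{2}}{k_{1}} \|x_{0}\|^{c}\,{\rm e}^{-\lambda t}.
\end{equation}
Taking the $c$-th root, one obtains $\|x(t)\| \leq M\,{\rm e}^{-\beta t}\|x_{0}\|$ with $M := (k_{2}/k_{1})^{1/c}$ and $\beta := \lambda/c$, matching Definition~\ref{def:stability}. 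The constant $a$ in that definition can be chosen as any positive number ensuring $x(t)$ remains in $X$; if $X = \mathbb{R}^{n}$ one may take $a$ arbitrary, and otherwise $a$ is selected so that the sublevel set $\{x : V(x) \leq k_{2}a^{c}\}$ is contained in $X$, which is forward invariant by the decay of $V$.

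The only mildly delicate step is the last one — verifying that trajectories do not escape $X$ so that the differential inequality remains valid for all $t \geq 0$. This follows from forward invariance of sublevel sets of $V$: since $\dot{V} \leq -\lambda V \leq 0$, the set $\{x : V(x) \leq V(x_{0})\}$ is invariant, so once $a$ is chosen so this sublevel set lies in $X$, the bound holds globally in time. All other steps are routine scalar estimates, so I do not expect any substantive obstacle beyond being explicit about this invariance argument.
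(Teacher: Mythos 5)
Your proposal is correct and is precisely the standard argument from the cited reference: the paper itself offers no proof of Theorem~\ref{thm:stability_general} (it is quoted from \cite{Khalil2002}), and your comparison-lemma derivation of $V(x(t)) \leq V(x_0)\,\mathrm{e}^{-\lambda t}$ followed by the two-sided power bounds, yielding $M = (k_2/k_1)^{1/c}$ and $\beta = \lambda/c$, is exactly how the textbook establishes it. Your added care about forward invariance of sublevel sets to keep the trajectory in $X$ is a sound (and welcome) explicit treatment of a point the paper sidesteps by assuming global existence of solutions.
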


Theorem~\ref{thm:stability_general} establishes that synthesizing a control input $u$ while enforcing condition~(\ref{eq:stability_condition}) achieves exponential stability.

\begin{figure}
\centering
\includegraphics[scale=1.0]{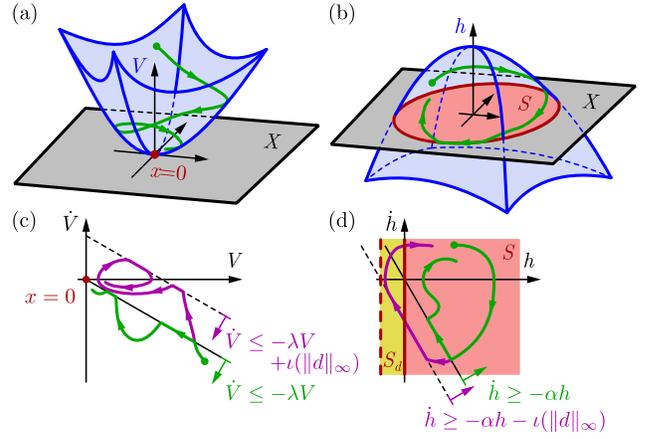}
\caption{
(a) A CLF and a stable trajectory.
(b) A CBF and a safe trajectory.
While $V$ is nonnegative, $h$ may take any real value.
(c) The stability condition~(\ref{eq:stability_condition}) and a stable trajectory (green),
the ISS condition~(\ref{eq:ISS_general}) and an input-to-state stable trajectory (purple).
For ISS the trajectory converges to a neighborhood of ${x=0}$.
(d) The safety condition~(\ref{eq:safety_condition}) and a safe trajectory (green),
the ISSf condition~(\ref{eq:ISSf_general}) and an input-to-state safe trajectory (purple).
For ISSf a superset $S_{d}$ of $S$ is forward invariant.
}
\label{fig:CLF_CBF}
\end{figure}

\subsection{Safety and Control Barrier Functions}

We consider system~(\ref{eq:system_general}) safe if its state $x(t)$ is contained in a {\em safe set} ${S \subset X}$ for all time, as stated below.

\begin{definition}\label{def:safety}
System~(\ref{eq:system_general}) is {\em safe} w.r.t. $S$ if $S$ is forward invariant under~(\ref{eq:system_general}), that is, ${x_{0} \in S \Rightarrow x(t) \in S}$, ${\forall t \geq 0}$.
\end{definition}

The choice of the safe set is application-driven, e.g., it may represent positions where a robot does not collide with obstacles.
Here, we define the safe set $S$ as the 0-superlevel set of a continuously differentiable function ${h : X \to \mathbb{R}}$:
\begin{equation}
S=\{x \in X: h(x) \geq 0 \}.
\label{eq:safeset_general}
\end{equation}
Then, control barrier functions (CBFs) can be used as tools to synthesize provably safe controllers in a similar fashion to how CLFs achieve stability.

\begin{definition}\label{def:CBF}
A continuously differentiable function ${h : X \to \mathbb{R}}$ is a {\em control barrier function (CBF)} for~(\ref{eq:system_general}) if there exists ${\alpha \in \mathbb{R}_{>0}}$
such that ${\forall x \in S}$:
\footnote{In general, $\alpha$ can be chosen as an extended class-$\mathcal{K}$ function, while here we use a constant for simplicity.}
\begin{equation}
\sup_{u \in U} \dot{h}(x,u) \geq - \alpha h(x),
\label{eq:CBF_condition}
\end{equation}
where
\begin{equation}
\dot{h}(x,u) = \dhdx(x) (f(x) + g(x) u)
\end{equation}
is the derivative of $h$ along system~(\ref{eq:system_general}).
\end{definition}

\begin{theorem}[\cite{AmesXuGriTab2017}]\label{thm:safety_general}
\textit{
If $h$ is a CBF for~(\ref{eq:system_general}), then any locally Lipschitz continuous controller ${u=k(x)}$ satisfying
\begin{equation}
\dot{h}(x,k(x)) \geq - \alpha h(x),
\label{eq:safety_condition}
\end{equation}
${\forall x \in S}$ renders~(\ref{eq:system_general}) safe w.r.t. $S$.
}
\end{theorem}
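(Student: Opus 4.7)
The plan is to reduce this to a scalar ODE comparison argument applied to the composite signal $H(t) := h(x(t))$ along any closed-loop solution. For an initial condition $x_0 \in S$ we have $H(0) = h(x_0) \geq 0$, and the goal is to show $H(t) \geq 0$ for all $t \geq 0$, which is exactly forward invariance of $S$. Since $k$ is locally Lipschitz and $f,g,h,\nabla h$ are continuous, $H$ is absolutely continuous along the (unique) solution $x(t)$, and wherever $x(t) \in S$ the hypothesis gives the differential inequality
\begin{equation}
\dot{H}(t) \;=\; \nabla h(x(t))\bigl(f(x(t)) + g(x(t))k(x(t))\bigr) \;\geq\; -\alpha\, H(t).
\end{equation}

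From here I would invoke the comparison lemma against the linear ODE $\dot{y} = -\alpha y$, $y(0) = H(0)$, whose closed-form solution is $y(t) = H(0)\mathrm{e}^{-\alpha t} \geq 0$. This yields the key bound
\begin{equation}
H(t) \;\geq\; H(0)\,\mathrm{e}^{-\alpha t} \;\geq\; 0,
\end{equation}
and therefore $x(t) \in S$ for all $t \geq 0$, which is the definition of safety in Definition~\ref{def:safety}.

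The main subtlety, and the step I would spend the most care on, is that the CBF inequality~(\ref{eq:safety_condition}) is only assumed to hold on $S$, whereas the comparison argument above needs the differential inequality along the entire trajectory. I would handle this by a standard contradiction: suppose $t^\star := \inf\{\,t \geq 0 : x(t) \notin S\,\}$ is finite. By continuity of $h \circ x$, we have $x(r) \in S$ for all $r \in [0,t^\star]$ and $h(x(t^\star)) = 0$. Applying the comparison lemma on the closed interval $[0,t^\star]$, where the hypothesis is valid, gives $h(x(t^\star)) \geq h(x_0)\mathrm{e}^{-\alpha t^\star} \geq 0$, so in fact $x(t^\star) \in S$. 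Local Lipschitz continuity of the closed-loop vector field then implies that the trajectory cannot instantaneously exit $S$ beyond $t^\star$ while simultaneously $\dot{H}(t^\star) \geq -\alpha H(t^\star)$ holds, contradicting the definition of $t^\star$. Thus no such finite $t^\star$ exists and the bound extends for all $t \geq 0$, completing the proof.
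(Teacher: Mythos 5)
The paper does not actually prove this theorem; it is imported verbatim from~\cite{AmesXuGriTab2017}. So your proposal can only be measured against the standard argument in that reference, and your first half reproduces it faithfully: along the closed-loop solution the scalar signal $H(t)=h(x(t))$ obeys $\dot H\geq-\alpha H$, and the comparison lemma against $\dot y=-\alpha y$ gives $H(t)\geq H(0)\,\mathrm{e}^{-\alpha t}\geq 0$. That part is correct and is exactly the intended mechanism.

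The genuine gap is in the step you yourself flag as the main subtlety, and your resolution of it does not go through. At $t^{\star}$ you have $H(t^{\star})=0$, so the hypothesis gives $\dot H(t^{\star})\geq 0$; but since $H<0$ immediately after $t^{\star}$, the one-sided difference quotient also forces $\dot H(t^{\star})\leq 0$. Hence $\dot H(t^{\star})=0$, which is entirely consistent with the trajectory exiting $S$ --- there is no contradiction, and local Lipschitz continuity of the closed-loop field does not supply one. Concretely, take the scalar system $\dot x = x$ (so $f(0)=0$, $g\equiv 0$) with $h(x)=-(\max\{x-1,0\})^{2}$: then $h$ is continuously differentiable, $S=\{x\leq 1\}$, and $\dot h(x)= -\alpha h(x)=0$ for every $x\in S$, yet the solution from $x_{0}=1$ leaves $S$ instantly. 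The pathology is that $\nabla h$ vanishes on $\partial S$ and the inequality is only assumed \emph{on} $S$. To close the argument you need one of the extra hypotheses that the cited reference (implicitly) relies on: either the CBF inequality holds on an open set $D\supseteq S$, so the comparison lemma remains valid on a neighborhood of $[0,t^{\star}]$ and yields $H\geq 0$ slightly past $t^{\star}$; or $\nabla h(x)\neq 0$ for all $x\in\partial S$, in which case Nagumo's sub-tangentiality theorem applies directly at the boundary (where the condition reduces to $\dot h\geq 0$). Without adding one of these, the final step of your contradiction argument is not a valid deduction.
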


Theorem~\ref{thm:safety_general} establishes safety-critical controller synthesis by condition~(\ref{eq:safety_condition}).
For example, a desired but not necessarily safe controller $k_{\rm d}(x)$ can be modified in a minimally invasive way to a safe controller by solving the quadratic program:
\begin{align}
\begin{split}
k(x) = {\rm arg}\!\min_{u \in U} & \; ( u - k_{\rm d}(x) )^\top ( u - k_{\rm d}(x) )  \\
\text{s.t.} \ & \; \dot{h}(x,u) \geq - \alpha h(x).
\end{split}
\label{eq:QP}
\end{align}
The Lipschitz continuity of this controller is discussed in~\cite{AmesXuGriTab2017}.

\subsection{Effect of Disturbances}

In practice, robotic systems are often subject to unknown disturbances that may compromise stability or safety.
For example, a bounded disturbance ${d \in \mathbb{R}^{m}}$ added to the input $u$ leads to the system ${\dot{x}=f(x) + g(x) (u+d)}$.

To address disturbances, the notion of exponential stability can be extended to {\em exponential input-to-state stability (ISS)} by modifying Definition~\ref{def:stability}.
Namely, we require that there exists a class-$\mathcal{K}$ function $\mu$ such that
$\| x_{0} \| \leq a \Rightarrow {\| x(t) \| \leq M {\rm e}^{-\beta t} \| x_{0} \| + \mu(\| d \|_{\infty})}$, ${\forall t \geq 0}$.
That is,
solutions converge to a neighborhood of the origin which depends on the size of the disturbance.
\cite{sontag1995characterizations2, sontag2008input} showed that exponential ISS is achieved by strengthening~(\ref{eq:stability_condition}) in Theorem~\ref{thm:stability_general} to:
\begin{equation}
\dot{V}(x,u,d) \leq - \lambda V(x) + \iota(\|d\|_{\infty}),
\label{eq:ISS_general}
\end{equation}
for some class-$\mathcal{K}$ function $\iota$.

Similarly,
safety can be extended to {\em input-to-state safety (ISSf)} by requiring that the system stays within a neighborhood ${S_{d} \supseteq S}$ of the safe set $S$ which depends on the size of the disturbance: ${x_{0} \in S_{d} \Rightarrow x(t) \in S_{d}}$, ${\forall t \geq 0}$.
We define this neighborhood as a 0-superlevel set:
\begin{equation}
S_{d} = \{ x \in X : h(x) + \gamma(\|d\|_{\infty}) \geq 0 \}, 
\label{eq:safeset_ISSf}
\end{equation}
with some class-$\mathcal{K}$ function $\gamma$.
It was established in~\cite{Kolathaya2019} that ISSf is guaranteed by replacing~(\ref{eq:safety_condition}) in Theorem~\ref{thm:safety_general} with:
\begin{equation}
\dot{h}(x,u,d) \geq - \alpha h(x) -  \iota(\|d\|_{\infty}),
\label{eq:ISSf_general}
\end{equation}
for some class-$\mathcal{K}$ function $\iota$.


\section{MODEL-FREE SAFETY-CRITICAL CONTROL}
\label{sec:theory}

Now consider robotic systems with configuration space ${Q \subseteq \mathbb{R}^{n}}$, configuration coordinates ${q \in Q}$, set of admissible inputs ${U \subseteq \mathbb{R}^{m}}$, control input ${u \in U}$, and dynamics:
\begin{equation}
D(q) \ddot{q} + C(q,\dot{q}) \dot{q} + G(q) = B u,
\label{eq:system}
\end{equation}
where
${D(q) \in \mathbb{R}^{n \times n}}$ is the inertia matrix, ${C(q,\dot{q}) \in \mathbb{R}^{n \times n}}$ contains centrifugal and Coriolis forces, ${G(q) \in \mathbb{R}^{n}}$ involves gravity terms and ${B \in \mathbb{R}^{n \times m}}$ is the input matrix. 
$D(q)$ is symmetric, positive definite, ${\dot{D}(q,\dot{q}) - 2 C(q,\dot{q})}$ is skew-symmetric.
We consider control laws ${k : Q \times \mathbb{R}^{n} \to \mathbb{R}^{m}}$, ${u = k(q,\dot{q})}$, initial conditions ${q(0) = q_{0}}$, ${\dot{q}(0) = \dot{q}_{0}}$, and assume that a unique solution $q(t)$ exists for all ${t \geq 0}$.

We consider the robotic system {\em safe} if its configuration $q$ lies within a {\em safe set} $S$ for all time: ${q(t) \in S}$, ${t \geq 0}$.
\begin{assumption}
The safe set is defined as the 0-superlevel set of a continuously differentiable function ${h : Q \to \mathbb{R}}$:
\begin{equation}
S = \{ q \in Q : h(q) \geq 0 \},
\label{eq:safeset}
\end{equation}
where the gradient of $h$ is finite: ${\exists C_{h} \in \mathbb{R}_{>0}}$ such that ${\| \dhdq(q) \| \leq C_{h}}$, ${\forall q \in S}$.
That is, safety depends on the configuration $q$ only and
$h$ is independent of $\dot{q}$.
\end{assumption}

\begin{problem}
\textit{
For the robotic system~(\ref{eq:system}), design a controller ${u = k(q,\dot{q})}$ that achieves safety with respect to set $S$ in~(\ref{eq:safeset}), i.e., ${q(t) \in S}$, ${\forall t \geq 0}$ given certain initial conditions ${q_{0} \in Q}$ and ${\dot{q}_{0} \in \mathbb{R}^{n}}$.
}
\end{problem}

\subsection{Control Method}
Following~\cite{Singletary2021, Singletary2022}, we seek to maintain safety by synthesizing and tracking a safe velocity.
This reduces the complexity of safety-critical control significantly, while velocity tracking controllers are widely used~\cite{Spong2005}.
The approach allows safety-critical control in a model-free fashion.

We synthesize the {\em safe velocity} ${\dot{q}_{\rm s} \in \mathbb{R}^{n}}$ so that it satisfies:
\begin{equation}
\dhdq(q) \dot{q}_{\rm s} \geq - \alpha h(q),
\label{eq:safe_velocity}
\end{equation}
cf.~(\ref{eq:safety_condition}), for some ${\alpha \in \mathbb{R}_{>0}}$ to be selected.
The safe velocity $\dot{q}_{\rm s}$ depends on the configuration $q$. 
Note that~(\ref{eq:safe_velocity}) is a kinematic condition that does not depend on the full dynamics~(\ref{eq:system}).

To track the safe velocity, we define the tracking error:
\begin{equation}
\dot{e} = \dot{q} - \dot{q}_{\rm s}.
\label{eq:error}
\end{equation}
and use  a velocity tracking controller ${u = k(q,\dot{q})}$.
First, we consider the scenario that $u$ is able to drive the error $\dot{e}$ to zero exponentially, then we address the effect of disturbances.

\begin{assumption} \label{assum:tracking}
The velocity tracking controller ${u = k(q,\dot{q})}$ achieves exponentially stable tracking: ${\|\dot{e}(t)\| \leq M \|\dot{e}_{0}\| {\rm e}^{-\lambda t}}$ for some ${M, \lambda \in \mathbb{R}_{>0}}$.
That is, if $\dot{e}$ is differentiable ($\ddot{e}$, $\ddot{q}_{\rm s}$ exist), there exists a continuously differentiable Lyapunov function ${V : Q \times \mathbb{R}^{n} \to \mathbb{R}_{\geq 0}}$ such that ${\forall (q,\dot{e}) \in Q \times \mathbb{R}^{n}}$:
\begin{equation}
k_{1} \| \dot{e} \| \leq V(q,\dot{e}) \leq k_{2} \| \dot{e} \|,
\label{eq:Lyapunov}
\end{equation}
for some ${k_{1}, k_{2} \in \mathbb{R}_{>0}}$,
and there exists ${\lambda \in \mathbb{R}_{>0}}$ such that ${\forall (q,\dot{e},\dot{q},\ddot{q}_{\rm s}) \in Q \times \mathbb{R}^{n} \times \mathbb{R}^{n} \times \mathbb{R}^{n}}$ $u$ satisfies:
\begin{equation}
\dot{V}(q,\dot{e},\dot{q},\ddot{q}_{\rm s},u) \leq - \lambda V(q,\dot{e}),
\label{eq:stability}
\end{equation}
cf.~(\ref{eq:stability_condition}).
For exposition's sake, below we assume $\ddot{q}_{\rm s}$ exists and we use~(\ref{eq:stability}).
This assumption is relaxed later in Remark~\ref{rem:technical}.
\end{assumption}

Before discussing its safety guarantees, we demonstrate the applicability of this method on a motivating example.

\begin{example}[Double integrator system] \label{ex:double_integrator}
Here we revisit the example in~\cite{Singletary2021}.
As the simplest instantiation of~(\ref{eq:system}), consider a double integrator system in two dimensions:
\begin{equation}
\ddot{q} = u,
\label{eq:double_integrator}
\end{equation}
where ${q \in \mathbb{R}^{2}}$ is the planar position of the robot and ${u \in \mathbb{R}^{2}}$.
Our goal is to navigate the system from a start position $q_{0}$ to a goal $q_{\rm g}$ while avoiding obstacles.
A simple solution is to realize the desired velocity ${\dot{q}_{\rm d} = -K_{\rm P} (q - q_{\rm g})}$ that is based on a proportional controller with gain ${K_{\rm P} \in \mathbb{R}_{>0}}$.

We can avoid an obstacle of radius $r$ centered at $q_{\rm o}$ by the help of the distance ${d = \| q - q_{\rm o} \|}$ and the CBF:
\begin{equation}
h(q) = d - r,
\label{eq:single_integrator_CBF}
\end{equation}
with gradient
${\dhdq(q) = (q - q_{\rm o})^\top/\| q - q_{\rm o} \| = n_{\rm o}^\top}$ equal to the unit vector $n_{\rm o}$ pointing from the obstacle to the robot.
Then, the safe velocity can be found by using condition~(\ref{eq:safe_velocity}).
Specifically, we modify the desired velocity $\dot{q}_{\rm d}$ in a minimally invasive fashion by solving the quadratic program:
\begin{align}
\begin{split}
{\rm arg}\!\!\min_{\dot{q}_{\rm s} \in \mathbb{R}^2} & \; ( \dot{q}_{\rm s} - \dot{q}_{\rm d} )^\top ( \dot{q}_{\rm s} - \dot{q}_{\rm d} )  \\
\text{s.t.} \ & \; n_{\rm o}^\top \dot{q}_{\rm s} \geq - \alpha (d - r),
\end{split}
\label{eq:single_integrator_QP}
\end{align}
cf.~(\ref{eq:QP}).
Based on the KKT conditions~\cite{boyd2004convex}, it has the solution:
\begin{equation}
\dot{q}_{\rm s} = \dot{q}_{\rm d} + \max \{ -n_{\rm o}^\top \dot{q}_{\rm d} - \alpha (d - r), 0 \} n_{\rm o}.
\label{eq:single_integrator_safe_velocity}
\end{equation}
The safe velocity can be tracked for example by the controller ${u = - K_{\rm D} (\dot{q} - \dot{q}_{\rm s})}$ with gain ${K_{\rm D} \in \mathbb{R}_{>0}}$.

\begin{figure}
\centering
\includegraphics[scale=1.0]{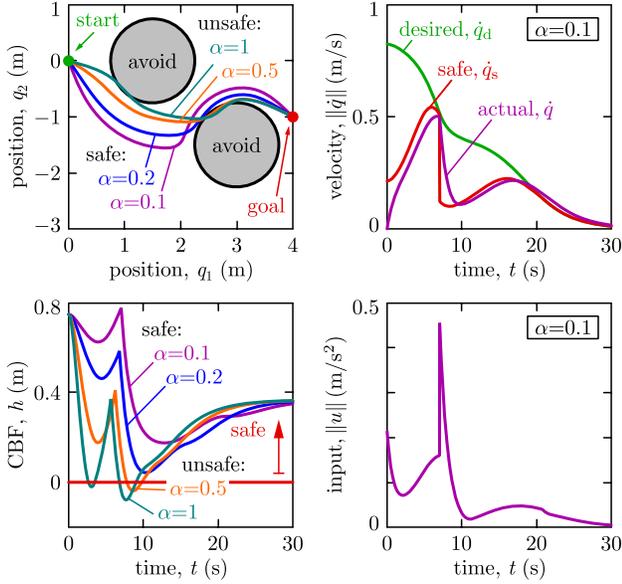}
\caption{
Numerical simulation of the double integrator system~(\ref{eq:double_integrator}) tracking the safe velocity~(\ref{eq:single_integrator_safe_velocity}).
The controller is able to keep the system safe if parameter $\alpha$ is selected to be small enough.
}
\label{fig:double_integrator}
\end{figure}

Fig.~\ref{fig:double_integrator} shows four simulation results for avoiding two obstacles with parameters ${K_{\rm P}=0.2\,{\rm s^{-1}}}$, ${K_{\rm D}=1\,{\rm s^{-1}}}$ and ${\alpha = 0.1, 0.2, 0.5}$ and ${1\,{\rm s^{-1}}}$, respectively.
With the proposed approach, the double integrator system avoids the obstacles, although the second-order dynamics was not directly taken into account during the CBF and control design.
The condition for safety, however, is picking a small enough $\alpha$ value (e.g. 0.1 or 0.2), while safety is violated for larger $\alpha$ (e.g. 0.5 or 1).
We remark that for multiple obstacles we considered the closest one at each time.
This results in a nonsmooth CBF which has been analyzed in~\cite{Glotfelter2020}.
Accordingly, the safe velocity $\dot{q}_{\rm s}$ is only piecewise differentiable; for simplicity, our constructions are restricted to the differentiable segments.
\end{example}

\subsection{Main Result}

In what follows, our main result proves that tracking the safe velocity achieves safety for the full dynamics if parameter $\alpha$ is selected to be small enough.
Specifically, for tracking controllers satisfying Assumption~\ref{assum:tracking} stability translates into safety for the full system~(\ref{eq:system}) if ${\lambda > \alpha}$.
As this result is agnostic to the application domain, this culminates in {\em model-free safety-critical control}.
Realizing velocity tracking controllers, however, depends on the application.
Later we give examples for such controllers and corresponding CLFs.

The following theorem summarizes the safety guarantees provided by tracking the safe velocity.
\begin{theorem} \label{thm:safety}
\textit{
Consider system~(\ref{eq:system}), safe set~(\ref{eq:safeset}), safe velocity satisfying~(\ref{eq:safe_velocity}), and velocity tracking controller satisfying~(\ref{eq:stability}).
If ${\lambda > \alpha}$, safety is achieved such that
${(q_{0},\dot{e}_{0}) \in S_{V} \Rightarrow q(t) \in S}$,
${\forall t \geq 0}$, where:
\begin{align}
\begin{split}
& S_{V} = \{ (q,\dot{e}) \in Q \times \mathbb{R}^{n} : h_{V}(q,\dot{e}) \geq 0 \}, \\
& h_{V}(q,\dot{e}) = - V(q,\dot{e}) + \alpha_{\rm e} h(q),
\end{split}
\label{eq:SV}
\end{align}
with ${\alpha_{\rm e} = (\lambda - \alpha) k_{1} / C_{h} > 0}$
and $C_{h}$, $k_{1}$ defined at~(\ref{eq:safeset},~\ref{eq:Lyapunov}).
}
\end{theorem}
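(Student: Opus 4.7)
The plan is to show that the composite function $h_V(q,\dot{e}) = -V(q,\dot{e}) + \alpha_{\rm e} h(q)$ behaves like a barrier function for the coupled state $(q,\dot{e})$, so that forward invariance of $S_V$ follows from a Nagumo-type argument (i.e., a differential inequality of the form $\dot{h}_V \geq -\alpha h_V$, directly in the spirit of Theorem~\ref{thm:safety_general}). Forward invariance of $S_V$ then implies $V(q,\dot{e}) \leq \alpha_{\rm e} h(q)$ along trajectories, and since $V \geq 0$, this forces $h(q(t)) \geq 0$, i.e., $q(t) \in S$.

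First I would differentiate $h_V$ along closed-loop trajectories of~(\ref{eq:system}) with the tracking controller. The term $-\dot{V}$ is handled by Assumption~\ref{assum:tracking}, which gives $-\dot{V} \geq \lambda V$. For the term $\alpha_{\rm e}\dot{h}$, I would use $\dot{q} = \dot{q}_{\rm s} + \dot{e}$ to split
\begin{equation}
\dot{h}(q) = \nabla h(q)\,\dot{q}_{\rm s} + \nabla h(q)\,\dot{e},
\end{equation}
bound the first summand below by $-\alpha h(q)$ via the safe-velocity condition~(\ref{eq:safe_velocity}), and bound the error term using Cauchy--Schwarz together with the gradient bound $\|\nabla h\|\leq C_h$ and the Lyapunov sandwich~(\ref{eq:Lyapunov}):
\begin{equation}
\nabla h(q)\,\dot{e} \geq -C_h \|\dot{e}\| \geq -\tfrac{C_h}{k_1} V(q,\dot{e}).
\end{equation}
Combining these pieces yields
\begin{equation}
\dot{h}_V \;\geq\; \Bigl(\lambda - \alpha_{\rm e}\tfrac{C_h}{k_1}\Bigr)V \;-\; \alpha_{\rm e}\alpha\, h.
\end{equation}

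The choice $\alpha_{\rm e} = (\lambda-\alpha)k_1/C_h$ is tailored so that the coefficient of $V$ is exactly $\alpha$, producing $\dot{h}_V \geq \alpha V - \alpha_{\rm e}\alpha h = -\alpha h_V$. The hypothesis $\lambda > \alpha$ is precisely what makes $\alpha_{\rm e} > 0$, so that $h_V\geq 0$ still encodes $h \geq V/\alpha_{\rm e} \geq 0$. From here, a standard comparison/Nagumo argument (exactly as in the proof of Theorem~\ref{thm:safety_general}) gives $h_V(q(t),\dot{e}(t)) \geq h_V(q_0,\dot{e}_0)\,{\rm e}^{-\alpha t} \geq 0$ whenever $(q_0,\dot{e}_0)\in S_V$, which in turn implies $\alpha_{\rm e} h(q(t)) \geq V(q(t),\dot{e}(t)) \geq 0$ and hence $q(t)\in S$.

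The only real subtlety is regularity: the differential inequality for $V$ in Assumption~\ref{assum:tracking} is phrased under the existence of $\ddot{q}_{\rm s}$, so $\dot{h}_V$ must be interpreted accordingly along the differentiable segments of the trajectory (the paper already flags this issue in Remark~\ref{rem:technical} and in Example~\ref{ex:double_integrator} for the nonsmooth CBF case). One also has to verify that the bound $\|\nabla h(q)\|\leq C_h$ can be applied along the trajectory; since $h_V\geq 0$ keeps $h\geq 0$, the state remains in $S$, which is exactly the domain where the gradient bound in the assumption is assumed to hold, so the argument is self-consistent. Modulo these regularity caveats, the proof reduces to the algebraic cancellation enabled by the specific choice of $\alpha_{\rm e}$.
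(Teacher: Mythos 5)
Your proposal is correct and takes essentially the same route as the paper's proof: differentiate $h_V$, apply the stability condition, split $\dot q = \dot q_{\rm s}+\dot e$, invoke the safe-velocity condition, and use Cauchy--Schwarz with $\|\nabla h\|\le C_h$ and $V\ge k_1\|\dot e\|$ so that the choice of $\alpha_{\rm e}$ cancels the error term and yields $\dot h_V \ge -\alpha h_V$, after which forward invariance of $S_V$ and $V\ge 0$ give $q(t)\in S$. The only difference is cosmetic bookkeeping (you factor the cancellation through $V$ rather than through $\|\dot e\|$), and your regularity caveats match what the paper defers to Remark~\ref{rem:technical}.
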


\begin{proof}
Since  ${V(q,\dot{e}) \geq 0}$, the implication ${h_{V}(q,\dot{e}) \geq 0} \Rightarrow {h(q) \geq 0}$ holds.
Thus, ${h_{V}(q(t),\dot{e}(t)) \geq 0}$, ${\forall t \geq 0}$ is sufficient to prove.
We prove this by noticing that the initial conditions satisfy ${h_{V}(q_{0},\dot{e}_{0}) \geq 0}$ and we also have:
\begin{align}
\begin{split}
\dot{h}_{V}& (q,\dot{e},\dot{q},\ddot{q}_{\rm s},u)
= - \dot{V}(q,\dot{e},\dot{q},\ddot{q}_{\rm s},u) + \alpha_{\rm e} \dhdq(q) \dot{q} \\
& \geq \lambda V(q,\dot{e}) + \alpha_{\rm e} \dhdq(q) \dot{q}_{\rm s} + \alpha_{\rm e} \dhdq(q) \dot{e} \\
& \geq \lambda V(q,\dot{e}) - \alpha_{\rm e} \alpha h(q) + \alpha_{\rm e} \dhdq(q) \dot{e} \\
& \geq (\lambda - \alpha) V(q,\dot{e}) - \alpha_{\rm e} \| \dhdq(q) \| \| \dot{e} \| - \alpha h_{V}(q,\dot{e}) \\
& \geq (\lambda - \alpha) k_{1} \| \dot{e} \| - \alpha_{\rm e} C_{h} \| \dot{e} \| - \alpha h_{V}(q,\dot{e}) \\
& \geq - \alpha h_{V}(q,\dot{e}).
\end{split}
\end{align}
Here we used the following properties in the 6 lines of the inequality: (i) definition~(\ref{eq:SV}) of $h_{V}$, (ii) stability condition~(\ref{eq:stability}) and definition~(\ref{eq:error}) of $\dot{e}$, (iii) condition~(\ref{eq:safe_velocity}) on the safe velocity, (iv) definition~(\ref{eq:SV}) of $h_{V}$ and the Cauchy-Schwartz inequality, (v) lower bound of $V$ in~(\ref{eq:Lyapunov}) and upper bound $C_{h}$ of $\| \dhdq(q) \|$, (vi) definition of $\alpha_{\rm e}$.
This guarantees ${h_{V}(q(t),\dot{e}(t)) \geq 0}$, ${\forall t \geq 0}$ by Theorem~\ref{thm:safety_general}.
\end{proof}

\begin{remark}
Condition ${\lambda > \alpha}$ means the controller tracks the safe velocity fast enough (characterized by $\lambda$) compared to how fast the boundary of the safe set may be approached
(characterized by $\alpha$).
In practice, one can pick a small enough $\alpha$ for a given velocity tracking controller, for example, by gradually increasing $\alpha$ from $0$.
The existence of such $\alpha$ is guaranteed by the Theorem.
Note that there is a trade-off: for smaller $\alpha$ the system may become more conservative, evolving farther from the boundary of the safe set.
\end{remark}

\begin{remark} \label{rem:reducedorder}
Condition~(\ref{eq:safe_velocity}) is equivalent to designing a safe control input $\dot{q}_{\rm s}$ for the single integrator system $\dot{q} = \dot{q}_{\rm s}$.
Thus, this approach is a manifestation of control based on reduced-order models.
While $h$ is a CBF for the reduced-order model, $h_{V}$ is a CBF for the full system~(\ref{eq:system}) as a dynamic extension of $h$, similar to the energy-based extension in~\cite{Singletary2022}.
Other reduced-order models of the form ${\dot{q} = A(q) \mu_{\rm s}}$ with control input ${\mu_{\rm s} \in \mathbb{R}^{k}}$ and transformation ${A(q) \in \mathbb{R}^{n \times k}}$ can also be used.
This, for example, includes the unicycle model for wheeled robots with ${q=(x,y,\psi) \in \mathbb{R}^{3}}$ containing Cartesian positions and yaw angle and ${\mu_{\rm s}=(v_{\rm s},\omega_{\rm s})\in \mathbb{R}^{2}}$ containing forward velocity and yaw rate:
\begin{equation}
\begin{bmatrix}
\dot{x} \\ \dot{y} \\ \dot{\psi}
\end{bmatrix} = 
\begin{bmatrix}
\cos \psi & 0 \\
\sin \psi & 0 \\
0 & 1
\end{bmatrix}
\begin{bmatrix}
v_{\rm s} \\
\omega_{\rm s}
\end{bmatrix}.
\label{eq:unicycle}
\end{equation}
The safe velocity $\mu_{\rm s}$ is given by
${\dhdq(q) A(q) \mu_{\rm s} \geq - \alpha h(q)}$ based on~(\ref{eq:safe_velocity}), and the proof of Theorem~\ref{thm:safety} holds with substitution ${\dot{q}_{\rm s} = A(q) \mu_{\rm s}}$.
The tracking controller $u$, however, must provide property~(\ref{eq:stability}) with respect to ${\dot{e} = \dot{q} - A(q) \mu_{\rm s}}$.
\end{remark}

\begin{remark}
Theorem~\ref{thm:safety} requires initial conditions to satisfy ${(q_{0},\dot{e}_{0}) \in S_{V} \iff h(q_{0}) \geq V(q_{0},\dot{e}_{0}) / \alpha_{\rm e}}$.
This is a stricter condition than ${q_{0} \in S \iff h(q_{0}) \geq 0}$ that is usually required in safety-critical control (cf.~Definition~\ref{def:safety}).
The additional conservatism is reduced when the initial tracking error $\dot{e}_{0}$ is smaller (since ${V(q_{0},\dot{e}_{0})}$ is smaller) and when the tracking is faster, i.e., ${\lambda - \alpha}$ is larger (since $\alpha_{\rm e}$ is larger).
\end{remark}

\begin{remark}\label{rem:technical}
The error $\dot{e}$ is assumed to be differentiable in Assumption~\ref{assum:tracking} only for exposition's sake.
Theorem~\ref{thm:safety} can be extended to non-differentiable signals satisfying
${\|\dot{e}(t)\| \leq M \|\dot{e}_{0}\| {\rm e}^{-\lambda t}}$.
The proof relies on the fact that
${\dot{h}(q,\dot{q}) \geq -\alpha h(q) - C_{h} M \|\dot{e}_{0}\| {\rm e}^{-\lambda t}}$ holds, and by the comparison lemma with ${\dot{y}(t) = - \alpha y(t) - C_{h} M \|\dot{e}_{0}\| {\rm e}^{-\lambda t}}$, ${y(0) = h(q_{0})}$ one can show that ${h(q(t)) \geq y(t) \geq 0}$.
\end{remark}

\subsection{Effect of Disturbances}

Now consider that ideal exponential tracking of the safe velocity is not possible.
This can be captured via a bounded input disturbance $d$, that represents the effect of imperfect tracking controllers, time delays or modeling errors.
Then, instead of safety, one shall guarantee input-to-state safety (ISSf), i.e., the invariance of the larger set ${S_{d} \supseteq S}$:
\begin{align}
\begin{split}
& S_{d} = \{ q \in Q : h_{d}(q) \geq 0 \}, \\
& h_{d}(q) = h(q) + \gamma(\|d\|_{\infty}),
\end{split}
\label{eq:Sd}
\end{align}
where $\gamma$ is a class-$\mathcal{K}$ function to be specified.
We also introduce the dynamic extension ${S_{Vd} \supseteq S_{V}}$ of set $S_{d}$:
\begin{align}
\begin{split}
& S_{Vd} = \{ (q,\dot{e}) \in Q \times \mathbb{R}^{n} : h_{Vd}(q,\dot{e}) \geq 0 \}, \\
& h_{Vd}(q,\dot{e}) = h_{V}(q,\dot{e}) + \gamma(\|d\|_{\infty}).
\end{split}
\label{eq:SVd}
\end{align}

We show that ISSf is guaranteed by input-to-state stable (ISS) tracking: ${\|\dot{e}(t)\| \leq M \|\dot{e}_{0}\| {\rm e}^{-\lambda t} + \mu(\| d \|_{\infty})}$.
Note that exponential ISS is our strongest assumption.
When the tracking is poor, $\mu(\| d \|_{\infty})$ dominates this bound.
If the error does not decay (${M=0}$), the bound reduces to ${\|\dot{e}(t)\| \leq \|\dot{e}\|_{\infty}}$ and we recover the traditional ISSf guarantees in~\cite{Kolathaya2019}.
For ISS, instead of~(\ref{eq:stability}) the tracking controller shall satisfy:
\begin{equation}
\dot{V}(q,\dot{e},\dot{q},\ddot{q}_{\rm s},u,d) \leq - \lambda V(q,\dot{e}) + \iota(\|d\|_{\infty}),
\label{eq:ISS}
\end{equation}
for some class-$\mathcal{K}$ function $\iota$.
The connection between ISS and ISSf is summarized in the following Corollary of Theorem~\ref{thm:safety}.

\begin{corollary} \label{cor:ISSf}
\textit{
Consider system~(\ref{eq:system}), sets $S_{d}$ and $S_{Vd}$ in~(\ref{eq:Sd}) and~(\ref{eq:SVd}), safe velocity satisfying~(\ref{eq:safe_velocity}), and velocity tracking controller satisfying~(\ref{eq:ISS}).
If ${\lambda > \alpha}$, input-to-state safety is achieved such that
${(q_{0},\dot{e}_{0}) \in S_{Vd} \Rightarrow q(t) \in S_{d}}$,
${\forall t \geq 0}$, where $\alpha_{\rm e}$ is given in Theorem~\ref{thm:safety} and ${\gamma(\| d \|_{\infty}) = \iota(\| d \|_{\infty})/\alpha}$.
}
\end{corollary}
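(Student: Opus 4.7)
The corollary is the input-to-state analogue of Theorem~\ref{thm:safety}, so the natural plan is to mirror that proof line by line and carry a disturbance term along. Concretely, the goal is to establish, under the closed-loop dynamics, a CBF-type dissipation inequality $\dot{h}_{Vd} \geq -\alpha\, h_{Vd}$ on $S_{Vd}$, so that Theorem~\ref{thm:safety_general} applied to the extended state $(q,\dot{e})$ yields forward invariance of $S_{Vd}$, from which $q(t) \in S_d$ then follows. Since $\|d\|_\infty$ is a constant in time, the offset $\gamma(\|d\|_\infty)$ between $h_V$ and $h_{Vd}$ does not contribute to the time derivative, so one is essentially re-running the Theorem~\ref{thm:safety} chain of inequalities with one extra bookkeeping term.

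The central computation expands
\begin{equation*}
\dot{h}_{Vd}(q,\dot{e},\dot{q},\ddot{q}_{\rm s},u,d) = -\dot{V}(q,\dot{e},\dot{q},\ddot{q}_{\rm s},u,d) + \alpha_{\rm e}\,\nabla h(q)\,\dot{q},
\end{equation*}
and then applies, in exactly the order used in the proof of Theorem~\ref{thm:safety}: (i) the ISS bound~(\ref{eq:ISS}) in place of~(\ref{eq:stability}), which introduces an additional $-\iota(\|d\|_\infty)$ term relative to the deterministic case; (ii) the splitting $\dot{q} = \dot{q}_{\rm s} + \dot{e}$ together with the safe-velocity condition~(\ref{eq:safe_velocity}); (iii) the Cauchy--Schwartz inequality on $\nabla h(q)\,\dot{e}$; (iv) the bounds $V \geq k_{1}\|\dot{e}\|$ and $\|\nabla h(q)\| \leq C_{h}$; and (v) the choice $\alpha_{\rm e} = (\lambda-\alpha)k_{1}/C_{h}$, which cancels the leading $\|\dot{e}\|$ terms just as in Theorem~\ref{thm:safety}. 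The result is the deterministic bound augmented by a single residual disturbance term, namely $\dot{h}_{Vd} \geq -\alpha\, h_{V} - \iota(\|d\|_\infty)$.

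The delicate step, and the one where the specific form of $\gamma$ is forced, is the final rewriting in terms of $h_{Vd}$ rather than $h_{V}$. Substituting $h_{V} = h_{Vd} - \gamma(\|d\|_\infty)$ and using the prescribed $\gamma(\|d\|_\infty) = \iota(\|d\|_\infty)/\alpha$ gives
\begin{equation*}
-\alpha h_{V} - \iota(\|d\|_\infty) = -\alpha h_{Vd} + \alpha\,\gamma(\|d\|_\infty) - \iota(\|d\|_\infty) = -\alpha h_{Vd},
\end{equation*}
so $\dot{h}_{Vd} \geq -\alpha h_{Vd}$ as required. Theorem~\ref{thm:safety_general} then yields $h_{Vd}(q(t),\dot{e}(t)) \geq 0$ for all $t \geq 0$, and because $V \geq 0$ this inequality forces $\alpha_{\rm e} h(q(t)) + \gamma(\|d\|_\infty) \geq 0$, which is the defining condition for $q(t) \in S_{d}$ under the stated scaling convention. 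The main obstacle is precisely this last piece of bookkeeping: the class-$\mathcal{K}$ function $\gamma$ must be tuned so that the unavoidable $-\iota(\|d\|_\infty)$ residue from the ISS bound collapses into the $-\alpha h_{Vd}$ comparison form rather than accumulating along trajectories; everything else is a transparent adaptation of the deterministic proof, and the non-differentiable case can be handled via the comparison-lemma argument of Remark~\ref{rem:technical}.
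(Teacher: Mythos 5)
Your proof is correct and follows exactly the route the paper intends: the paper's own ``proof'' is the one-line remark that one repeats the chain of inequalities from Theorem~\ref{thm:safety} with $h$ and $h_{V}$ replaced by $h_{d}$ and $h_{Vd}$, and your explicit computation --- carrying the extra $-\iota(\|d\|_{\infty})$ from the ISS bound~(\ref{eq:ISS}) through the same five steps and absorbing it via $\gamma(\|d\|_{\infty}) = \iota(\|d\|_{\infty})/\alpha$ into the $-\alpha h_{Vd}$ comparison form --- is precisely that argument written out in full. The only point worth noting is the one you already flagged in your last paragraph: $h_{Vd} \geq 0$ literally yields $\alpha_{\rm e} h(q) + \gamma(\|d\|_{\infty}) \geq 0$, i.e., membership in $S_{d}$ with margin $\gamma/\alpha_{\rm e}$ rather than $\gamma$ unless $\alpha_{\rm e} \geq 1$; this scaling subtlety is glossed over by the paper as well and does not affect the qualitative ISSf conclusion.
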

The proof follows the same steps as those in the proof of Theorem~\ref{thm:safety}, by replacing $h$ and $h_{V}$ with $h_{d}$ and $h_{Vd}$.
Corollary~\ref{cor:ISSf} concludes that input-to-state stable tracking of a safe velocity implies input-to-state safety for the full system, i.e., robust velocity tracking implies robust safety guarantees.

\subsection{Velocity Tracking Controllers}

\begin{figure*}
\centering
\includegraphics[scale=1.0]{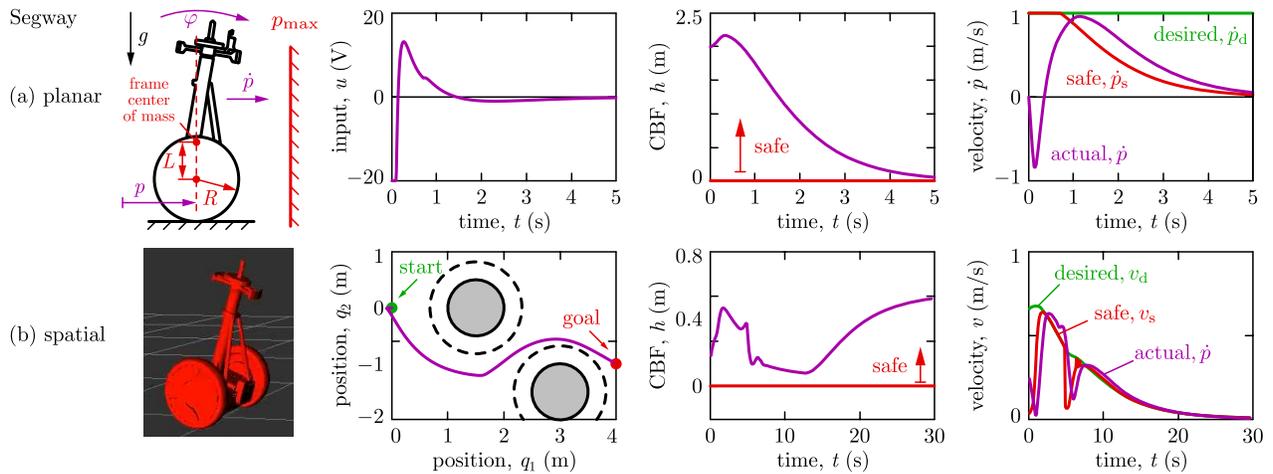}
\caption{
High-fidelity simulation of a Ninebot E+ Segway platform. (a) Planar dynamical model~(\ref{eq:system},~\ref{eq:segway_dynamics}) with the model-free safety-critical controller~(\ref{eq:segway_safe_velocity},~\ref{eq:segway_controller_2D}).
(b) Spatial dynamical model with the model-free controller~(\ref{eq:unicycle_QP},~\ref{eq:segway_controller_3D}).
The controllers keeps the system safe (the CBF $h$ is positive for all time).
}
\label{fig:simulation}
\end{figure*}

Finally, we consider examples of velocity tracking controllers that provide stability by~(\ref{eq:stability}) or ISS by~(\ref{eq:ISS}).
As the simplest choice, we consider a model-free D controller:
\begin{equation}
u = - K_{\rm D} \dot{e},
\label{eq:D_controller}
\end{equation}
where ${K_{\rm D} \in \mathbb{R}^{m \times n}}$ is selected so that ${K = B K_{\rm D}}$ is positive definite.
Furthermore, when model-dependent terms are well-known, they can also be included in the control law.
If ${n=m}$ and $B$ is invertible (i.e., the system is fully actuated), one may use a D controller with gravity compensation:
\begin{equation}
u = B^{-1} (G(q) - K \dot{e}),
\label{eq:D_with_gravity_controller}
\end{equation}
with a positive definite gain ${K \in \mathbb{R}^{n \times n}}$.
Moreover, one can also use a heavily model-dependent extension:
\begin{equation}
u = B^{-1} (D(q) \ddot{q}_{\rm s} + C(q,\dot{q}) \dot{q}_{\rm s} + G(q) - K \dot{e}).
\label{eq:model_dependent_controller}
\end{equation}
While this controller may achieve better tracking, it requires
$D(q)$ and $C(q,\dot{q})$ which may have complicated expressions and may be
expensive to compute in practice.

We characterize
these controllers
by the constant ${\lambda \in \mathbb{R}_{>0}}$:
\begin{equation}
\lambda = \frac{\sigma_{\rm min}(K)}{\displaystyle \sup_{q \in Q} \sigma_{\rm max}(D(q))},
\label{eq:lambda}
\end{equation}
where $\sigma_{\rm min}$ and $\sigma_{\rm max}$ denote the smallest and largest eigenvalue.
The eigenvalues are positive real numbers due to the positive definiteness of $D(q)$ and $K$.
Accordingly, $\lambda$ represents the smallest gain divided by the largest inertia, hence characterizes how fast controllers may track.
We associate the controllers with the Lyapunov function candidate:
\begin{equation}
V(q,\dot{e}) = \sqrt{\frac{1}{2} \dot{e}^\top D(q) \dot{e}},
\label{eq:Lyapunov_energy}
\end{equation}
that has the bound~(\ref{eq:Lyapunov}) with
${k_{1} = \inf_{q \in Q} \sqrt{\sigma_{\rm min}(D(q))/2}}$ and
${k_{2} = \sup_{q \in Q} \sqrt{\sigma_{\rm max}(D(q))/2}}$.
We also define the linear class-$\mathcal{K}$ function
${\iota(\| d \|_{\infty}) = \| d \|_{\infty} / (2 k_{1})}$.

With the above controllers, the parameters to be selected during control design are $\alpha$ and $K_{\rm D}$ or $K$.
Now we state that these controllers satisfy the required stability properties.
\begin{proposition}
\textit{
Consider system~(\ref{eq:system}), Lyapunov function $V$ defined by~(\ref{eq:Lyapunov_energy}), constant $\lambda$ given by~(\ref{eq:lambda}) and ${\dot{e} \neq 0}$.
\begin{itemize}
\item[(i)] Controller~(\ref{eq:D_controller}) satisfies the ISS condition~(\ref{eq:ISS}) with respect to
${d = - D(q) \ddot{q}_{\rm s} - C(q,\dot{q}) \dot{q}_{\rm s} - G(q)}$.
\item[(ii)] Controller~(\ref{eq:D_with_gravity_controller}) satisfies the ISS condition~(\ref{eq:ISS}) with respect to
${d = - D(q) \ddot{q}_{\rm s} - C(q,\dot{q}) \dot{q}_{\rm s}}$ when ${\dot{q}_{\rm s} \not\equiv 0}$ and the stability condition~(\ref{eq:stability}) when ${\dot{q}_{\rm s} \equiv 0}$.
\item[(iii)] Controller~(\ref{eq:model_dependent_controller}) satisfies the stability condition~(\ref{eq:stability}).
\end{itemize}
}
\end{proposition}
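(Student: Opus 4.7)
The plan is to derive a single unified computation of $\dot{V}$ along the closed-loop dynamics, apply the skew-symmetry of $\dot{D}(q,\dot{q}) - 2C(q,\dot{q})$ to eliminate the Coriolis terms, and then read off each of the three cases by tracking which physical terms are cancelled by the controller and which remain as the disturbance $d$. First I would square $V$ and differentiate to get
\begin{equation*}
2 V \dot{V} = \dot{e}^\top D(q) \ddot{e} + \tfrac{1}{2} \dot{e}^\top \dot{D}(q,\dot{q}) \dot{e}.
\end{equation*}
Using $\ddot{e} = \ddot{q} - \ddot{q}_{\rm s}$ and the manipulator equation~(\ref{eq:system}), I would express $D(q)\ddot{e} = Bu - C(q,\dot{q})\dot{q} - G(q) - D(q)\ddot{q}_{\rm s}$, and split the Coriolis term as $C(q,\dot{q})\dot{q} = C(q,\dot{q})\dot{e} + C(q,\dot{q})\dot{q}_{\rm s}$; the first piece is what the skew-symmetry identity will cancel, and the second piece is free to be either cancelled by the controller or absorbed into the disturbance.

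Next I would substitute each controller. For (iii), $Bu = D(q)\ddot{q}_{\rm s} + C(q,\dot{q})\dot{q}_{\rm s} + G(q) - K\dot{e}$ cancels everything except $-K\dot{e}$ and $-C(q,\dot{q})\dot{e}$, giving $D(q)\ddot{e} = -C(q,\dot{q})\dot{e} - K\dot{e}$. For (i), only $-K\dot{e}$ is produced by the controller since $K = B K_{\rm D}$, so all remaining uncancelled terms form the disturbance $d = -D(q)\ddot{q}_{\rm s} - C(q,\dot{q})\dot{q}_{\rm s} - G(q)$. For (ii), $G(q)$ is additionally cancelled, leaving $d = -D(q)\ddot{q}_{\rm s} - C(q,\dot{q})\dot{q}_{\rm s}$; when $\dot{q}_{\rm s} \equiv 0$ one also has $\ddot{q}_{\rm s} \equiv 0$ and $\dot{e} = \dot{q}$, so this disturbance vanishes identically and one recovers pure stability rather than ISS. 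In each case I arrive at $\dot{e}^\top D(q) \ddot{e} = -\dot{e}^\top C(q,\dot{q}) \dot{e} - \dot{e}^\top K \dot{e} + \dot{e}^\top d$. The skew-symmetry of $\dot{D} - 2C$ gives $\tfrac{1}{2}\dot{e}^\top \dot{D}\dot{e} = \dot{e}^\top C \dot{e}$, which kills the $C$ term, yielding the clean identity $2 V \dot{V} = -\dot{e}^\top K \dot{e} + \dot{e}^\top d$.

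To close the bound I would use that $2V^2 = \dot{e}^\top D(q) \dot{e} \leq \sup_{q} \sigma_{\max}(D(q)) \|\dot{e}\|^2$ together with $\dot{e}^\top K \dot{e} \geq \sigma_{\min}(K) \|\dot{e}\|^2$ to obtain $-\dot{e}^\top K \dot{e} \leq -2 \lambda V^2$ with $\lambda$ as in~(\ref{eq:lambda}). For the disturbance term, Cauchy--Schwarz together with the lower bound $V \geq k_{1} \|\dot{e}\|$ from~(\ref{eq:Lyapunov}) yields $\dot{e}^\top d \leq \|\dot{e}\| \|d\|_{\infty} \leq (V/k_{1})\|d\|_{\infty}$, where any dimension-dependent constant from the norm equivalence between $\|d\|$ and $\|d\|_{\infty}$ can be absorbed into the bound on the disturbance without affecting the class-$\mathcal{K}$ structure. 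Dividing by $2V > 0$ (legitimate since the proposition assumes $\dot{e} \neq 0$) gives $\dot{V} \leq -\lambda V + \iota(\|d\|_{\infty})$ with $\iota(s) = s/(2k_{1})$, which is exactly~(\ref{eq:ISS}); setting $d \equiv 0$ in cases (iii) and (ii)-with-$\dot{q}_{\rm s}\equiv 0$ recovers~(\ref{eq:stability}). The main obstacle I anticipate is purely bookkeeping: isolating the $C(q,\dot{q})\dot{e}$ piece so that the skew-symmetry identity applies cleanly, and in case (ii) with $\dot{q}_{\rm s}\equiv 0$ observing that the apparent disturbance $-C(q,\dot{q})\dot{q}$ is actually $-C(q,\dot{q})\dot{e}$ and hence merges into the cancellable Coriolis part rather than surviving as a genuine disturbance.
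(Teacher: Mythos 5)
Your proposal is correct and follows essentially the same route as the paper: differentiate the kinetic-energy-type $V$, substitute the error dynamics, use skew-symmetry of $\dot{D}-2C$ to kill the Coriolis term, identify the uncancelled terms as $d$ for each controller, and close with the eigenvalue bound for $-\dot{e}^\top K\dot{e}/(2V)\le-\lambda V$ and Cauchy--Schwarz plus $V\ge k_1\|\dot e\|$ for the disturbance term. The only cosmetic difference is that you work with $2V\dot V$ before dividing by $2V>0$, whereas the paper differentiates the square root directly; the content is identical.
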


\begin{proof}
The proof follows that in Section~8.2 of \cite{Spong2005}.
Here we prove case (i) only.
The proof of case (ii) is the same when ${\dot{q}_{\rm s} \not\equiv 0}$, whereas the proofs of case (ii) when ${\dot{q}_{\rm s} \equiv 0}$ and case (iii) can be obtained by substituting ${d \equiv 0}$.

We differentiate $V$ given by~(\ref{eq:Lyapunov_energy}):
\begin{equation}
\dot{V}(q,\dot{e},\dot{q},\ddot{q}_{\rm s},u,d)
= \frac{1}{2 V(q,\dot{e})} \left( \frac{1}{2} \dot{e}^\top \dot{D}(q,\dot{q}) \dot{e} + \dot{e}^\top D(q) \ddot{e} \right),
\end{equation}
and substitute the error dynamics corresponding to~(\ref{eq:system},~\ref{eq:error}):
\begin{equation}
D(q) \ddot{e} = - C(q,\dot{q}) \dot{e} - D(q) \ddot{q}_{\rm s} - C(q,\dot{q}) \dot{q}_{\rm s} - G(q) + B u.
\end{equation}
For controller~(\ref{eq:D_controller}) this leads to:
\begin{equation}
\dot{V}(q,\dot{e},\dot{q},\ddot{q}_{\rm s},u,d)
= \frac{-\dot{e}^\top K \dot{e} + \dot{e}^\top d}{2 V(q,\dot{e})},
\label{eq:Vdot}
\end{equation}
where the term ${\dot{e}^\top (\dot{D}(q,\dot{q}) - 2 C(q,\dot{q})) \dot{e}}$ dropped since ${\dot{D}(q,\dot{q}) - 2 C(q,\dot{q})}$ is skew-symmetric.

Based on~(\ref{eq:Vdot}), now we show~(\ref{eq:ISS}) holds.
Since~(\ref{eq:lambda}) implies
${\dot{e}^\top K \dot{e} - \lambda \dot{e}^\top D(q) \dot{e} \geq 0}$,
the definition~(\ref{eq:Lyapunov_energy}) of $V$ leads to:
\begin{equation}
\frac{-\dot{e}^\top K \dot{e}}{2 V(q,\dot{e})} \leq - \lambda V(q,\dot{e}),
\label{eq:lambda_inequality}
\end{equation}
${\forall q \in Q, \dot{e} \in \mathbb{R}^{n}}$.
Furthermore, the Cauchy-Schwartz inequality, the bound~(\ref{eq:Lyapunov}) on $V$ and the definition of $\iota$ yield:
\begin{equation}
\frac{\dot{e}^\top d}{2 V(q,\dot{e})} \leq \frac{\| \dot{e} \| \| d \|_{\infty}}{2 k_{1} \| \dot{e} \|} = \iota(\|d\|_{\infty}),
\label{eq:iota_inequality}
\end{equation}
where $\| \dot{e} \|$ drops, making the right-hand side independent of time.
Substituting~(\ref{eq:lambda_inequality},~\ref{eq:iota_inequality}) into~(\ref{eq:Vdot}) yields~(\ref{eq:ISS}).
\end{proof}

\section{APPLICATIONS TO WHEELED, FLYING AND LEGGED ROBOTS}
\label{sec:application}

\begin{figure*}
\centering
\includegraphics[scale=1.0]{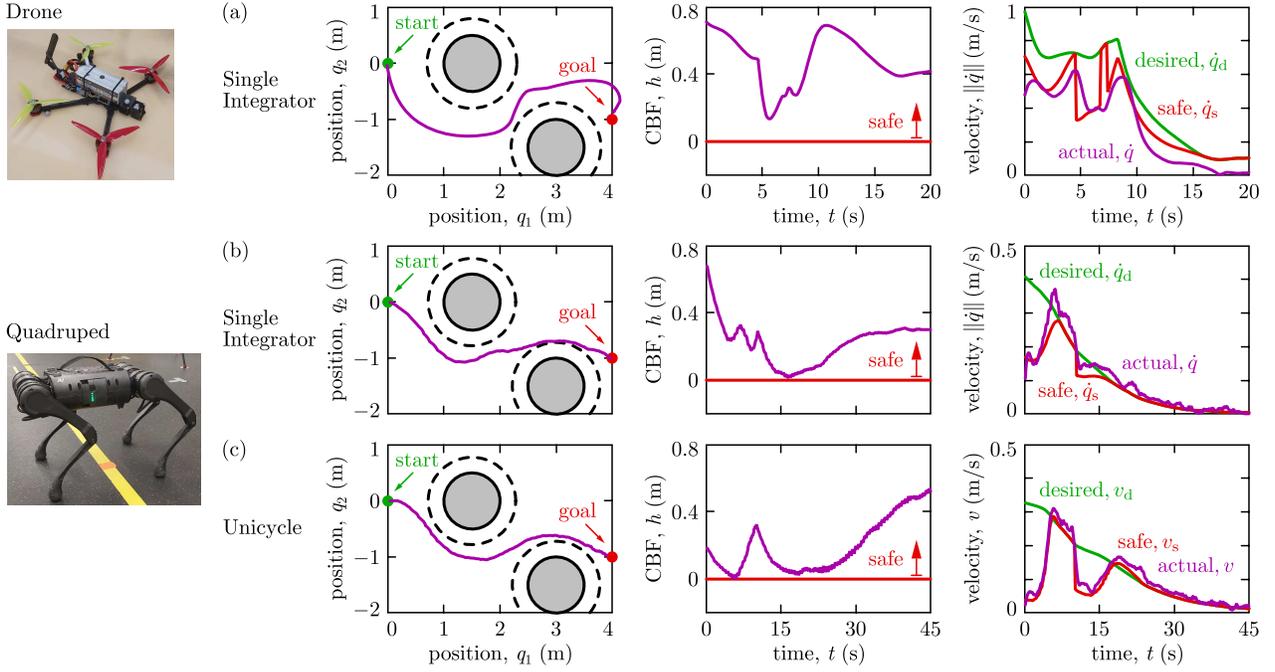}
\caption{
Hardware experiments using the proposed model-free safety-critical control method.
An obstacle avoidance task is accomplished by two fundamentally different robots: a custom-made racing Drone (top) and a Unitree A1 Quadruped (bottom).
(a) The Drone is tracking a safe velocity determined based on single integrator model.
(b) The Quadruped is tracking a safe velocity based on single integrator model via side-stepping and (c) based on unicycle model via turning.
Both robots executed the task with guaranteed safety.
A video of the experiments can be found at~\cite{video}.
}
\label{fig:experiment}
\end{figure*}

Now we apply the proposed control method to robotic platforms, including high-fidelity simulations of a Segway and hardware experiments on a Drone and a Quadruped.

\subsection{Numerical Simulation of Segway}

We consider a Ninebot E+ Segway platform with its planar and spatial high-fidelity dynamical models described in~\cite{gurriet2020scalable}.

\begin{example}[Segway in plane] \label{ex:segway_2D}
Consider the two-degrees of freedom planar Segway model in Fig.~\ref{fig:simulation}(a) with
configuration ${q = [p,\,\varphi]^\top \in \mathcal{Q} = \mathbb{R} \times [0,2\pi]}$ including the position $p$ and pitch angle $\varphi$.
The dynamics are in form~(\ref{eq:system}), where:
\begin{align}
\begin{split}
D(q) & \!=\!
\begin{bmatrix}
m_{0} & m L \cos \varphi \\
m L \cos \varphi & J_{0}
\end{bmatrix}\!, \;\;
G(q) \!=\!
\begin{bmatrix}
0 \\
- m g L \sin \varphi
\end{bmatrix}\!, \\
C(q,\dot{q}) & \!=\!
\begin{bmatrix}
b_{\rm t}/R & - b_{\rm t} - m L \dot{\varphi} \sin \varphi \\
- b_{\rm t} & b_{\rm t} R
\end{bmatrix}\!, \;\;
B \!=\!
\begin{bmatrix}
K_{\rm m}/R \\ -K_{\rm m}
\end{bmatrix}\!,
\end{split}
\label{eq:segway_dynamics}
\end{align}
with parameters given in Table~\ref{tab:segway} and ${u \in U =[-20,20]\,{\rm V}}$.

\bgroup
\setlength{\tabcolsep}{3pt}
\begin{table}
\caption{Parameters of the Segway Model}
\label{tab:segway}
\begin{center}
\begin{tabular}{c c c c}
\hline
Description & Parameter & Value & Unit\\
\hline
gravitational acceleration & $g$ & 9.81 & m/s$^2$ \\
\hline
radius of wheels & $R$ & 0.195 & m \\
mass of wheels & $M$ & 2$\times$2.485 & kg \\
mass moment of inertia of wheels & $J_{C}$ & 2$\times$0.0559 & kgm$^2$ \\
\hline
distance of wheel center to frame CoM & $L$ & 0.169 & m \\
mass of frame & $m$ & 44.798 & kg \\
mass moment of inertia of frame & $J_{G}$ & 3.836 & kgm$^2$ \\
\hline
lumped mass ${m_{0} = m + M + J_{C}/R^2}$ & $m_{0}$ & 52.710 & kg \\
lumped inertia ${J_{0} = m L^2 + J_{G}}$ & $J_{0}$ & 5.108 & kgm$^2$ \\
\hline
torque constant of motors & $K_{\rm m}$ & 2$\times$1.262 & Nm/V \\ 
damping constant of motors & $b_{\rm t}$ & 2$\times$1.225 & Ns \\
\hline
\end{tabular}
\end{center}
\vspace{-12pt}
\end{table}
\egroup

Our goal is to realize a desired forward velocity $\dot{p}_{\rm d}$ until reaching a wall at position $p_{\rm max}$, then stop automatically and safely in front of the wall.
This is captured by the CBF:
\begin{equation}
h(q) = p_{\rm max} - p,
\end{equation}
which, by condition~(\ref{eq:safe_velocity}), leads to the safe forward velocity:
\begin{equation}
\dot{p}_{\rm s} = \min \{ \dot{p}_{\rm d}, \alpha(p_{\rm max} - p) \},
\label{eq:segway_safe_velocity}
\end{equation}
similar to~(\ref{eq:single_integrator_safe_velocity}).
This safe velocity is tracked by the controller:
\begin{equation}
u = K_{\dot{p}} (\dot{p} - \dot{p}_{\rm s}) + K_{\varphi} \varphi + K_{\dot{\varphi}} \dot{\varphi}
\label{eq:segway_controller_2D}
\end{equation}
with
${K_{\dot{p}} = 50\,{\rm Vs/m}}$,
${K_{\varphi} = 150\,{\rm V/rad}}$,
${K_{\dot{\varphi}} = 40\,{\rm Vs/rad}}$, which also stabilizes the Segway to the upright position.

Fig.~\ref{fig:simulation}(a) shows simulation results where the Segway executes the task starting from ${p_{0} = 0}$, ${\varphi_{0} = -0.138\,{\rm rad}}$ (where its frame is vertical), ${\dot{p}_{0} = 0}$, ${\dot{\varphi}_{0} = 0}$, for ${\dot{p}_{\rm d} = 1\,{\rm m/s}}$, ${p_{\rm max} = 2\,{\rm m}}$ and ${\alpha = 0.5\,{\rm s^{-1}}}$.
Notice that controller~(\ref{eq:segway_safe_velocity},~\ref{eq:segway_controller_2D}) is model-free, it does not rely on the full dynamics~(\ref{eq:system},~\ref{eq:segway_dynamics}).
The gains $K_{\dot{p}}$, $K_{\varphi}$ and $K_{\dot{\varphi}}$, however, are tuned so that the full dynamics achieves stable velocity tracking.
These gains were tuned based on linearization and LQR in~\cite{gurriet2020scalable} and they determine the tracking performance with the associated $\lambda$.
\end{example}

\begin{example}[Segway in space] \label{ex:segway_3D}
Consider the spatial model of the Segway in Fig.~\ref{fig:simulation}(b) with 7-dimensional state space and 2 control inputs.
The task is to navigate it from a start point to a goal (left panel) while avoiding obstacles of radius 0.5 m (solid black), similar to Example~\ref{ex:double_integrator}.
The obstacle radius is buffered by the size of the Segway (dashed black) and the Segway's center must be kept outside this zone.

This task is accomplished by tracking a safe velocity obtained for the unicycle model~(\ref{eq:unicycle}); cf.~Remark~\ref{rem:reducedorder}.
We set the desired forward velocity and yaw rate
${\mu_{\rm d}=(v_{\rm d},\omega_{\rm d})}$
based on the distance
${d_{\rm g} = \| (x_{\rm g}-x,y_{\rm g}-y)\|}$
to the goal as 
${v_{\rm d} = K_{v} d_{\rm g}}$ and
${\omega_{\rm d} = -K_{\omega}(\sin \psi - (y_{\rm g} - y)/d_{\rm g})}$.
To avoid obstacles, we use a CBF that includes the heading direction:
\begin{equation}
h(q) = d - r - \delta \cos(\psi - \theta),
\label{eq:unicycle_CBF}
\end{equation}
where ${d = \| (x_{\rm o}-x,y_{\rm o}-y)\|}$
is the distance from the obstacle, ${\theta=\arctan((y_{\rm o} - y)/(x_{\rm o} - x))}$ is the angle towards the obstacle, and ${\delta \in \mathbb{R}_{>0}}$ is a tunable parameter.

This CBF is incorporated into the quadratic program:
\begin{align}
\begin{split}
{\rm arg}\!\!\!\min_{\mu_{\rm s} \in \mathbb{R}^2} & \; ( \mu_{\rm s} - \mu_{\rm d} )^\top \Gamma ( \mu_{\rm s} - \mu_{\rm d} )  \\
\text{s.t.} \ & \; \dhdq(q) A(q) \mu_{\rm s} \geq - \alpha h(q),
\end{split}
\label{eq:unicycle_QP}
\end{align}
cf.~(\ref{eq:single_integrator_QP}), where ${\Gamma = {\rm diag}\{1,R^2\}}$ is a weight between forward velocity and yaw rate with parameter ${R \in \mathbb{R}_{>0}}$.
The resulting safe velocity ${\mu_{\rm s}=(v_{\rm s},\omega_{\rm s})}$ is tracked by the controller:
\begin{equation}
u_{1,2} = K_{\dot{p}} (\dot{p} - v_{\rm s}) + K_{\varphi} \varphi + K_{\dot{\varphi}} \dot{\varphi} \pm K_{\dot{\psi}} (\dot{\psi} - \omega_{\rm s})
\label{eq:segway_controller_3D}
\end{equation}
used at the two wheels with the same gains as in Example~\ref{ex:segway_2D} and a gain ${K_{\dot{\psi}} = 10\,{\rm Vs/rad}}$ on the yaw rate $\dot{\psi}$.

With this approach, the Segway is able to move to the goal safely, while its controller~(\ref{eq:unicycle_QP},~\ref{eq:segway_controller_3D}) is model-free. Fig.~\ref{fig:simulation}(b) shows the safe motion for
${K_{v} = 0.16\,{\rm s^{-1}}}$,
${K_{\omega} = 0.8\,{\rm s^{-1}}}$,
${\alpha = 0.2\,{\rm s^{-1}}}$,
${\delta = 0.5\,{\rm m}}$ and
${R = 0.25\,{\rm m}}$.
\end{example}


   

\subsection{Hardware Experiments on Drone and Quadruped}

We executed the obstacle avoidance task of Example~\ref{ex:segway_3D} on two fundamentally different hardware platforms: a Drone and a Quadruped; see Fig.~\ref{fig:experiment}.
The obstacle locations were known to the robots, sensory information was used to determine the robots' position only.
We performed two classes of experiments: by synthesizing safe velocities based on the single integrator and unicycle models, respectively; cf.~Remark~\ref{rem:reducedorder}.
A video of the experiments can be found at~\cite{video}.


First, we considered the single integrator model, and we tracked the associated safe velocity with the Drone and the Quadruped by platform-specific tracking controllers.
We used CBF~(\ref{eq:single_integrator_CBF})
and safe velocity~(\ref{eq:single_integrator_safe_velocity}).
The desired velocity was ${\dot{q}_{\rm d} = -K_{\rm P} (q - q_{\rm g})}$ with saturation; cf.~Example~\ref{ex:double_integrator}.

The Drone was a custom-built racing drone~\cite{singletary2021onboard}, shown in Fig.~\ref{fig:experiment}(a).
It has 6 degrees of freedom and 4 actuators.
The state of the Drone (position, orientation and corresponding velocities) were measured by IMU and an OptiTrack motion capture system.
State estimation and control action computation ran at 400 Hz.
The safe velocity was commanded to the drone wireless from a desktop computer, while velocity tracking was done using an on-board betaflight flight controller.
The safe velocity was calculated with ${K_{\rm P}=0.7\,{\rm s^{-1}}}$ and ${\alpha=0.2\,{\rm s^{-1}}}$.
Fig.~\ref{fig:experiment}(a) shows the Drone reaching the goal safely, as guaranteed by Theorem~\ref{thm:safety} since $\alpha$ was selected small enough for the available tracking performance.
The value of $\alpha$ was chosen based on the simulated response of the single integrator.
$\alpha$ was not tuned for optimal performance, and it could potentially be increased for less conservatism.

The Quadruped was a Unitree A1 quadrupedal robot, shown in Fig.~\ref{fig:experiment}(b), which has 18 degrees of freedom and 12 actuators.
Its position was measured based on odometry assuming the feet do not slip, while joint states were available via built-in encoders.
An ID-QP walking controller was realized at 1 kHz loop rate on this robot to track a stable walking gait with prescribed forward and lateral velocities and yaw rate, designed using the concepts in~\cite{buchli2009inverse}. Individual commands were tracked via a motion primitive framework described in~\cite{Ubellacker2021}.
In the single integrator experiments, the yaw rate was set to zero, while the safe velocity~(\ref{eq:single_integrator_safe_velocity}) with ${K_{\rm P}=0.1\,{\rm s^{-1}}}$ and ${\alpha=0.2\,{\rm s^{-1}}}$ was tracked by forward- and side-stepping.
The Quadruped executed the task safely similar to the Drone (see Fig.~\ref{fig:experiment}(b)), although it has fundamentally different dynamic behavior.
This indicates the application-agnostic nature of our model-free approach.


Finally, we used the unicycle model~(\ref{eq:unicycle}) and CBF~(\ref{eq:unicycle_CBF}) to achieve safety on the Quadruped.
The safe forward velocity and yaw rate in~(\ref{eq:unicycle_QP}) were tracked by the same ID-QP walking controller.
Fig.~\ref{fig:experiment}(c) shows the Quadruped traversing the obstacle course with
${K_{v} = 0.08\,{\rm s^{-1}}}$,
${K_{\omega} = 0.4\,{\rm s^{-1}}}$,
${\alpha = 0.2\,{\rm s^{-1}}}$,
${\delta = 0.5\,{\rm m}}$ and
${R = 0.5\,{\rm m}}$.
While safety is maintained, the Quadruped performs the task with different behavior than in the previous experiment: it walks forward and turns instead of forward- and side-stepping.
Still, safety is provably guaranteed
--- and in a model-free fashion.


\section{CONCLUSIONS}
\label{sec:conclusions}

We considered safety-critical control for robotic systems in a model-free fashion following~\cite{Singletary2021}.
Our control method relies on a synthesizing a safe velocity using control barrier functions and tracking this velocity.
We stated and proved theoretical guarantees for the safety of our method.
Namely, safety is achieved when the safe velocity is tracked faster than how fast the corresponding safe motion may approach the boundary of the safe set.
Due to its model-free nature, our approach is application-agnostic.
By simulation and hardware experiments we demonstrated that it works for various robots such as a Segway, a Drone and a Quadruped.

While our method does not rely on the full dynamical model of the robot to achieve safety, it relies on kinematic models such as the single integrator or unicycle models.
Our future work includes further exploration of safety-critical control based on reduced-order models beyond simple kinematic ones.
We also plan to study how to relax the assumption on the performance of the velocity tracking controller.






\bibliographystyle{IEEEtran}
\bibliography{2022_icra_ral}

\end{document}